\title{Lifted Successor Generation in Numeric Planning (Extended Version)}
\author{
    Dominik Drexler\textsuperscript{\rm 1}
}
\newtheorem{definition}{Definition}
\newtheorem{theorem}[definition]{Theorem}
\newtheorem{lemma}[definition]{Lemma}
\newcommand{\Omit}[1]{}
\newcommand{\defined}[1]{\emph{#1}}
\newcommand{\tup}[1]{\ensuremath{\langle #1 \rangle}}
\newcommand{\inlinecite}[1]{\citet{#1}}
\newcommand{\egcite}[1]{\citep[e.g.,][]{#1}}
\newcommand{\naturals}{\ensuremath{\mathbb{N}}}
\newcommand{\reals}{\ensuremath{\mathbb{R}}}
\newcommand{\extreals}{\ensuremath{\overline{\reals}}}
\newcommand{\pluseq}{\mathrel{+}=}
\newcommand{\minuseq}{\mathrel{-}=}
\newcommand{\timeseq}{\mathrel{\times}=}
\newcommand{\diveq}{\mathrel{\div}=}
\newcommand{\hull}{\ensuremath{\mathit{hull}}}
\newcommand{\intervals}{\ensuremath{\mathcal{J}}}
\newcommand{\interval}{\ensuremath{J}}
\newcommand{\predicates}{\ensuremath{\mathcal{P}}}
\newcommand{\predicate}{\ensuremath{P}}
\newcommand{\atom}{\ensuremath{p}}
\newcommand{\groundatom}{\ensuremath{\overline{p}}}
\newcommand{\groundatoms}{\ensuremath{\overline{\predicate}}}
\newcommand{\literal}{\ensuremath{l}}
\newcommand{\groundliteral}{\ensuremath{\overline{l}}}
\newcommand{\functions}{\ensuremath{\mathcal{F}}}
\newcommand{\function}{\ensuremath{F}}
\newcommand{\functionterm}{\ensuremath{f}}
\newcommand{\groundfunctionterm}{\ensuremath{\overline{f}}}
\newcommand{\groundfunctionterms}{\ensuremath{\overline{\function}}}
\newcommand{\functionexpression}{\ensuremath{e}}
\newcommand{\groundfunctionexpression}{\ensuremath{\overline{e}}}
\newcommand{\constraint}{\ensuremath{c}}
\newcommand{\groundconstraint}{\ensuremath{\overline{c}}}
\newcommand{\actionschemas}{\ensuremath{\mathcal{A}}}
\newcommand{\actionschema}{\ensuremath{a}}
\newcommand{\groundaction}{\ensuremath{\overline{a}}}
\newcommand{\objects}{\ensuremath{\mathcal{O}}}
\newcommand{\object}{\ensuremath{o}}
\newcommand{\variables}{\ensuremath{\mathcal{X}}}
\newcommand{\variable}{\ensuremath{x}}
\newcommand{\initial}{\ensuremath{\state_0}}
\newcommand{\goal}{\ensuremath{\mathcal{G}}}
\newcommand{\arity}[1]{\ensuremath{\mathit{ar}(#1)}}
\newcommand{\element}{\ensuremath{y}}
\newcommand{\groundelement}{\ensuremath{\overline{y}}}
\newcommand{\substitution}{\ensuremath{\rho}}
\newcommand{\substitutions}{\ensuremath{\mathcal{R}}}
\newcommand{\bigO}{\ensuremath{O}}
\newcommand{\precondition}[1]{\ensuremath{\mathit{pre}(#1)}}
\newcommand{\effect}[1]{\ensuremath{\mathit{eff}(#1)}}
\newcommand{\pprecondition}[1]{\ensuremath{\mathit{pre}^+(#1)}}
\newcommand{\mprecondition}[1]{\ensuremath{\mathit{pre}^-(#1)}}
\newcommand{\peffect}[1]{\ensuremath{\mathit{eff}^+(#1)}}
\newcommand{\meffect}[1]{\ensuremath{\mathit{eff}^-(#1)}}
\newcommand{\nprecondition}[1]{\ensuremath{\mathit{pre}^{\Delta}(#1)}}
\newcommand{\neffect}[1]{\ensuremath{\mathit{eff}^{\Delta}(#1)}}
\newcommand{\state}{\ensuremath{\mathcal{S}}}
\newcommand{\assignmentset}{\ensuremath{\lambda}}
\newcommand{\assignmentdegree}{\ensuremath{d}}
\newcommand{\consistencygraph}{\ensuremath{\mathcal{G}}}
\newcommand{\vertices}{\ensuremath{\mathcal{V}}}
\newcommand{\vertex}{\ensuremath{v}}
\newcommand{\edge}{\ensuremath{e}}
\newcommand{\edges}{\ensuremath{\mathcal{E}}}
\newcommand{\removaledges}{\ensuremath{\mathcal{I}}}
\newcommand{\clique}{\ensuremath{\mathcal{C}}}
\newcommand{\threesatformula}{\ensuremath{\phi}}
\newcommand{\threesatvariables}{\ensuremath{Y}}
\newcommand{\threesatvariable}{\ensuremath{y}}
\newcommand{\threesatliteral}{\ensuremath{l}}
\newcommand{\threesatclauses}{\ensuremath{Z}}
\newcommand{\threesatclause}{\ensuremath{z}}
\newcommand{\gadget}{\ensuremath{G}}
\begin{document}

\maketitle

\begin{abstract}

Most planners ground numeric planning tasks, given in a first-order-like language, into a ground task representation. However, this can lead to an exponential blowup in task representation size, which occurs in practice for hard-to-ground tasks. We extend a state-of-the-art lifted successor generator for classical planning to support numeric precondition applicability. The method enumerates maximum cliques in a substitution consistency graph. Each maximum clique represents a substitution for the variables of the action schema, yielding a ground action. We augment this graph with numeric action preconditions and prove the successor generator is exact under formally specified conditions. When the conditions fail, our generator may list inapplicable ground actions; a final applicability check filters these without affecting completeness. However, this cannot happen in 23 of 25 benchmark domains, and it occurs only in 1 domain. To the authors' knowledge, no other lifted successor generator supports numeric action preconditions. This enables future research on lifted planning for a very rich planning fragment.

\end{abstract}

\section{Introduction}

Numeric planning extends classical planning by introducing numeric state variables, constraints, and effects, thereby enabling the modeling of richer and more realistic domains \cite{fox-long-jair2003}. Although tasks are typically specified in a first-order-like representation, most planners fully ground them. This grounding step often performs well on many existing benchmarks but can lead to an exponential increase in representation size. At the same time, the ground representation facilitates the development of domain-independent heuristics to guide search \egcite{hoffmann-jair2003,aldinger-nebel-ki2017,scala-et-al-ijcai2017}, which are less straightforward to define directly over first-order representations. \emph{However,  exponentially larger ground tasks may not fit into memory, causing search to fail, thereby imposing a limitation on lifted search guidance}, such as lifted heuristics \egcite{chen-thiebaux-neurips2024,borelli-et-al-socs2025} or generalized policies \egcite{wang-thibaux-icaps2024} learned on smaller example tasks yet scaling to larger ones. Our work is also compatible with extensions of such to numeric planning \egcite{frances-et-al-aaai2021,correa-et-al-aaai2022,horcik-et-al-aaai2022,drexler-et-al-icaps2022,wichlacz-et-al-ijcai2022,stahlberg-et-al-kr2023}.

We address the \emph{lifted successor generation} problem. More precisely, given a first-order action schema and a state, the task is to enumerate exactly those ground actions that are applicable in the state. In comparison, grounded planners often instantiate ground actions independent of a specific state upfront, which can be exponentially many, and store them in a data structure that allows fast enumeration for a given state \cite{helmert-aij2009}. In the lifted successor generation setting, however, applicability must ensure consistent object-variable substitutions across all relevant lifted representations, such as literals and numeric constraints. To the best of our knowledge, existing lifted successor generators build upon conjunctive queries \cite{correa-et-al-icaps2020}, constraint satisfaction problem \cite{frances-phd2017}, or $k$-clique enumeration in $k$-partite graphs (KPKC) \cite{stahlberg-ecai2023}.

We build upon the KPKC-based approach, initially proposed for classical planning tasks, which formulates lifted successor generation as a $k$-clique enumeration problem in $k$-partite graphs. This approach encodes an action precondition as a $ k$-partite substitution consistency graph, where each $k$-clique represents an object-variable substitution of the action parameters yielding a candidate applicable ground action. The generator is sound and complete when predicates have arity at most two, meaning that it cannot generate inapplicable ground actions. However, in the general case, a verification of actual applicability is required.

We extend the KPKC-based approach to consider additional structural information from numeric planning tasks. Our approach incorporates numeric constraints into the substitution consistency graph. Consequently, we generalize the theoretical guarantees from the classical to the numeric planning fragment. Our theoretical result guarantees soundness and completeness if and only if literals, function terms, and constraints have arity at most two. We ran an empirical evaluation that validates this result, showing that inapplicable ground actions cannot occur in $23$ out of $25$ domains across two benchmark suites. Moreover, we observe such actions occurring in only one of the two candidate domains.

\section{Background}

In this section, we define the background and notation on interval arithmetic \cite{moore-et-al-2009} and numeric planning \cite{fox-long-jair2003,haslum-et-al-2019}. We consider extended real numbers $\extreals = \reals\cup\{-\infty,+\infty\}$.

\subsection{Interval Arithmetic}

A \defined{(closed) interval} is $\interval = [a,b] = \{r\in\extreals\mid a\leq r\leq b\}$ and the set of all intervals is $\intervals = \{[a,b]\mid a,b\in\extreals, a\leq b\}~\dot{\cup}~\{\emptyset\}$ where $\emptyset$ denotes the empty interval. 
For an arithmetic operator $\odot\in\{+,-,\times,\div\}$ and a comparison operator $\oplus\in\{=,<,>,\leq,\geq\}$, and for all $\interval,\interval'\in\intervals$, we define
\[
\begin{aligned}
\interval\odot \interval' &= \hull\bigl(\{x\odot y\mid x\in \interval, y\in \interval'~.~x\odot y \text{ is defined }\}\bigr), \\
\interval\oplus \interval' &= 
\begin{cases}
    \text{true} & \exists x\in \interval, y\in \interval'~.~x\oplus y \text{ evaluates to true,} \\
    \text{false} & \text{otherwise.}
\end{cases}
\end{aligned}
\]

where convex hull $\hull(S) = [\inf S, \sup S]$ for set $S$, with $\hull(\emptyset) = \emptyset$. 
If $S$ is unbounded from above (resp.~below) then $\sup S = +\infty$ (resp.~$\inf S = -\infty$).

\subsection{Numeric Planning}

A \defined{numeric planning task} is a tuple $\tup{\predicates,\functions,\actionschemas,\objects,\initial,\goal}$.
The \defined{planning domain} comprises $\tup{\predicates,\functions,\actionschemas}$, and the \defined{task information} comprises $\tup{\objects,\initial,\goal}$, namely:

\paragraph{Predicates and Literals.} $\predicates$ is a set of predicate symbols. 
Each \defined{predicate symbol} $\predicate$ in $\predicates$ has an associated arity $\arity{\predicate}$ in $\naturals$.
An \defined{atom} over a predicate $\predicate$ of arity $\arity{\predicate} = k$ is an expression $\predicate(\variable_1,\ldots,\variable_k)$ where $\variable_i$ with $i=1,\ldots,k$ are variables or constants.
An \defined{literal} is either an atom $\atom$ or its negation $\neg \atom$.

\paragraph{Functions and Numeric Expressions.} $\functions$ is a set of function symbols.
Each \defined{function symbol} $\function$ in $\functions$ has an associated arity $\arity{\function}$ in $\naturals$.
A \defined{function term} over a function symbol $\function$ of arity $\arity{\function} = k$ is an expression $\function(\variable_1,\ldots,\variable_k)$ where $\variable_i$ with $i=1,\ldots,k$ are variables or constants.
A \defined{function expression} is any expression built from function terms using arithmetic operators $\odot$.
A \defined{numeric constraint} is an expression of the form $\functionexpression\oplus\functionexpression'$ where $\functionexpression$ and $\functionexpression'$ are function expressions and $\oplus$ is a comparison operator.
A \defined{numeric effect} is an expression of the form $\functionterm \otimes \functionexpression$, where $\functionterm$ is a function term, 
$\functionexpression$ is a function expression, and $\otimes$ is one of $\{:=,\pluseq,\minuseq,\timeseq,\diveq\}$.

\paragraph{Objects and Substitutions.} $\objects$ is a set of objects (constants). 
A \emph{substitution function} $\substitution$ over some set of variables $\variables(\substitution)$ and objects $\objects$ is a function $\substitution : \variables(\substitution)\rightarrow\objects$ mapping each variable in $\variables(\substitution)$ to an object in $\objects$. We write $\substitutions(\variables,\objects)$ for the set of all $\substitution$ over $\variables$ and $\objects$. We write $\variable_1/\object_1,\ldots,\variable_n/\object_n$ for a substitution function $\substitution$ over variables $\variables = \{\variable_1,\ldots,\variable_n\}$ and objects $\objects$ such that $\substitution(\variable_i) = \object_i$ for all $i=1,\ldots,n$. We write $\substitution\subseteq\substitution'$ iff $\variables(\substitution)\subseteq\variables(\substitution')$ and $\substitution(\variable) = \substitution'(\variable)$ for all $\variable$ in $\variables(\substitution)$. The application of $\substitution$ to a syntactic element $\element$ (e.g., an atom, literal, function term, function expression, numeric constraint, numeric effect, or action), denoted by $\element[\substitution]$, replaces every occurrence of each variable $\variable_i$ with the corresponding object $\object_i = \substitution(\variable_i)$. The set of free variables occurring in an element $\element$ is denoted $\variables(\element)$, and the arity $\arity{\element}$ of $\element$ is set to $|\variables(\element)|$. We say that $\element$ is \defined{ground} and denote it by $\groundelement$ iff $\variables(\element)=\emptyset$.

\paragraph{States.} $\initial$ is the \defined{initial state}. A \defined{state} $\state$ consists of a set of ground atoms
$\groundatoms(\state)$ that hold in $\state$, and a set of ground function terms $\groundfunctionterms(\state)$.
For each $\groundfunctionterm\in\groundfunctionterms(\state)$, its value $\groundfunctionterm(\state)\in\reals$ is the value
associated with $\groundfunctionterm$ in $\state$. Ground atoms not in $\groundatoms(\state)$ are false; ground function terms
not in $\groundfunctionterms(\state)$ are undefined. A positive ground literal $\groundatom$ \defined{holds} in $\state$ iff
$\groundatom\in\groundatoms(\state)$, and a negative ground literal $\neg\groundatom$ \defined{holds} iff
$\groundatom\notin\groundatoms(\state)$. The value $\groundfunctionexpression(\state)$ (resp.\ $\groundconstraint(\state)$) of a
ground function expression (resp.\ ground numeric constraint) is defined in the usual way from the values
$\groundfunctionterm(\state)$. A ground numeric constraint \defined{holds} in $\state$ iff it evaluates to true.

\paragraph{Goals.} $\goal$ is the goal. The goal is a set of ground atoms and a set of ground numeric constraints.

\paragraph{Actions and Successors.} $\actionschemas$ is a set of actions. Each \defined{action} $\actionschema$ in $\actionschemas$ consists of two sets $\precondition{\actionschema}$ and $\effect{\actionschema}$ where $\precondition{\actionschema}$ is a set of precondition literals and numeric constraints and $\effect{\actionschema}$ is a set of effect literals and numeric effects. We write $\pprecondition{\actionschema}$,$\mprecondition{\actionschema}$,$\peffect{\actionschema}$, and $\meffect{\actionschema}$ for the sets of atoms of positive and negative literals in $\precondition{\actionschema}$ and $\effect{\actionschema}$, $\nprecondition{\actionschema}$ for the numeric constraints in $\precondition{\actionschema}$, and $\neffect{\actionschema}$ for the numeric effects in $\effect{\actionschema}$. A ground action $\groundaction$ is \defined{applicable} in a state $\state$ iff 
\begin{itemize}

\item every ground literal $\groundliteral$ and ground numeric constraint $\groundconstraint$ from $\precondition{\groundaction}$ is both defined and holds in $\state$,

\item for every ground numeric effect $\groundfunctionterm\otimes \groundfunctionexpression\in\effect{\groundaction}$, $\groundfunctionexpression(\state)$ is defined; and if $\otimes\neq :=$, then $\groundfunctionterm(\state)$ is also defined, and

\item the set of ground numeric effects in $\groundaction$ is \defined{non-conflicting}, i.e., there is at most one effect $\groundfunctionterm\otimes \groundfunctionexpression$ for a ground function term $\groundfunctionterm$ or
all effects $\groundfunctionterm\otimes \groundfunctionexpression$ that share the same $\groundfunctionterm$ are unambiguous, i.e., $\otimes\in\{\pluseq,\minuseq\}$ or $\otimes\in\{\timeseq,\diveq\}$.

\end{itemize}

Applying a ground action $\groundaction$ in a state $\state$ where it is applicable yields the \defined{successor state} $\state'$ with 
\begin{itemize}

\item $\groundatoms(\state') = (\groundatoms(\state)\setminus\meffect{\groundaction})\cup\peffect{\groundaction}$

\item $\groundfunctionterm(\state') = \groundfunctionterm(\state)$ if $\not\exists \groundfunctionterm\otimes\groundfunctionexpression\in\neffect{\groundaction}$  

\item $\groundfunctionterm(\state') = \groundfunctionexpression(\state)$ if $\groundfunctionterm :=\groundfunctionexpression\in\neffect{\groundaction}$

\item $\groundfunctionterm(\state') = \groundfunctionterm(\state) + \sum\limits_{\groundfunctionterm\pluseq\groundfunctionexpression\in\neffect{\groundaction}} \groundfunctionexpression(\state) - \sum\limits_{\groundfunctionterm\minuseq\groundfunctionexpression\in\neffect{\groundaction}} \groundfunctionexpression(\state)$ 

if $\exists \groundfunctionterm\pluseq\groundfunctionexpression\in\neffect{\groundaction}$ or $\exists \groundfunctionterm\minuseq\groundfunctionexpression\in\neffect{\groundaction}$

\item $\groundfunctionterm(\state') = \groundfunctionterm(\state) \cdot \prod\limits_{\groundfunctionterm\timeseq\groundfunctionexpression\in\neffect{\groundaction}} \groundfunctionexpression(\state) \cdot \prod\limits_{\groundfunctionterm\diveq\groundfunctionexpression\in\neffect{\groundaction}} \frac{1}{\groundfunctionexpression(\state)}$ 

if $\exists \groundfunctionterm\timeseq\groundfunctionexpression\in\neffect{\groundaction}$ or $\exists \groundfunctionterm\diveq\groundfunctionexpression\in\neffect{\groundaction}$

\end{itemize}

The objective for a given numeric planning task is to find a \defined{plan}, which is a sequence of ground actions, that when successively applied starting from the initial state, yields a state where the goal holds, i.e., all ground literals and ground numeric constraints mentioned in the goal hold.

\subsection{Substitution Consistency Graph}

The \defined{substitution consistency graph} $\consistencygraph_{\actionschema,\state}$ is a tuple $\tup{\vertices, \edges}$ where 
$\vertices = \{\variable/\object \mid \variable\in\variables(\actionschema), \object\in\objects \}$ is the set of vertices, each representing a possible variable substitution, and 
$\edges = \vertices^2\setminus (\removaledges^{\neq}\cup \removaledges^+\cup \removaledges^-)$ is the set of edges, each representing all possible pairs of variable substitutions \cite{stahlberg-ecai2023}.
An atom $\predicate(x_1,\ldots,x_n)$ \defined{matches} an atom $\predicate'(x_1',\ldots,x_n')$ iff $\predicate = \predicate'$ 
and for all $i=1,\ldots,n$, $x_i$ or $x_i'$ are variables, or else $x_i = x_i'$, i.e., both are objects.
The sets $\removaledges^{\neq}, \removaledges^+, \removaledges^-$ are:

\begin{itemize}

\item $\removaledges^{\neq} = \{\{x/o_1,x/o_2\} \mid x\in\variables(\actionschema), o_1,o_2\in\objects\}$, 
i.e., the set of edges whose corresponding substitution function assigns different objects to the same variable,

\item $\removaledges^+ = \{\{\vertex,\vertex'\}\mid \vertex,\vertex'\in \vertices,\exists \atom\in\pprecondition{\actionschema},\forall \atom'\in\state~.~\atom[\vertex,\vertex'] \text{ does not match } \atom'\}$,
i.e., the set of edges whose corresponding substitution function, when applied to any atom of a positive literal in the precondition, does not match any ground atom in the state.

\item $\removaledges^- = \{\{\vertex,\vertex'\}\mid \vertex,\vertex'\in \vertices,\exists \atom\in\mprecondition{\actionschema}~.~\groundatom[\vertex,\vertex']\in\state\}$, 
i.e., the set of edges whose corresponding substitution function, when applied to any atom of a negative literal in the precondition, always yields a ground atom that is part of the state.

\end{itemize}

\Omit{
\begin{lemma}

Consider an action $\actionschema$ with arity $k\geq 2$, a state $\state$, and a $k$-clique $\clique = \{\vertex_1,\ldots,\vertex_k\}$ in $\consistencygraph_{\actionschema,\state}$. Then, $\groundaction[\vertex_1,\ldots,\vertex_k]$, i.e., $\actionschema[\vertex_1,\ldots,\vertex_k]$ is a ground action $\groundaction$.

\label{lemma:classical:ground}

\end{lemma}
}

From \inlinecite{stahlberg-ecai2023}, it follows the following key theorem that allows using k-clique enumeration algorithms to generate precisely all applicable ground actions in a state.

\begin{theorem}

Consider an action $\actionschema$ with arity $k\geq 2$ where each literal $\literal\in\precondition{\actionschema}$ has an arity of at most $2$, and a state $\state$. Then, there exists a $k$-clique $\clique = \{\vertex_1,\ldots,\vertex_k\}$ in $\consistencygraph_{\actionschema,\state}$ iff all ground literals $\groundliteral = \literal[\vertex_1,\ldots,\vertex_k]$ with $\literal\in\precondition{\actionschema}$ hold in $\state$.

\label{thm:classical:sound_and_complete}

\end{theorem}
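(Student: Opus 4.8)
The plan is to prove both directions of the biconditional, relating the combinatorial object (a $k$-clique in $\consistencygraph_{\actionschema,\state}$) to the semantic condition (all ground precondition literals holding in $\state$). Throughout I would fix the action $\actionschema$ with $\variables(\actionschema) = \{x_1,\ldots,x_k\}$, note that any $k$-clique must pick exactly one vertex $x_i/o_i$ per variable (since $\removaledges^{\neq}$ forbids two vertices sharing a variable, and a $k$-clique on the vertex set $\{x/o\}$ with $k$ distinct variables forces one per variable), and thus identify a $k$-clique $\clique$ with a total substitution $\substitution \colon \variables(\actionschema) \to \objects$. The key structural fact I would isolate first is a \emph{pairwise-decomposition lemma}: because every literal in $\precondition{\actionschema}$ has arity at most $2$, whether the ground literal $\literal[\substitution]$ holds in $\state$ is determined by the restriction of $\substitution$ to the (at most two) variables of $\literal$ — equivalently, by a single vertex or a single edge of $\consistencygraph_{\actionschema,\state}$. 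This is exactly the hinge on which the arity bound turns.

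For the forward direction ($\Rightarrow$), assume $\clique = \{\vertex_1,\ldots,\vertex_k\}$ is a $k$-clique, inducing $\substitution$. Take any $\literal \in \precondition{\actionschema}$. If $\literal$ is a positive literal over atom $\atom$: its variables lie among some pair $\{\vertex_i,\vertex_j\}$ (or a singleton, or none), and since $\{\vertex_i,\vertex_j\} \in \edges$ it is not in $\removaledges^+$, so by definition of $\removaledges^+$ there is some $\atom' \in \state$ with $\atom[\vertex_i,\vertex_j]$ matching $\atom'$; as $\atom[\vertex_i,\vertex_j] = \atom[\substitution]$ is ground, ``matches'' collapses to equality, hence $\atom[\substitution] \in \groundatoms(\state)$ and the positive literal holds. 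Symmetrically, if $\literal = \neg\atom$ is negative: $\{\vertex_i,\vertex_j\} \notin \removaledges^-$ gives $\groundatom[\vertex_i,\vertex_j] = \atom[\substitution] \notin \state$, so the negative literal holds. (The arity-$0$ and arity-$1$ cases are handled by choosing $\vertex_i = \vertex_j$ equal to an arbitrary clique vertex, or any clique vertex mentioning the relevant variable; I would remark that $\removaledges^+,\removaledges^-$ are defined on pairs that may coincide, and that edges here include loops in the sense that a single vertex is consistent with itself.) This covers all $\literal \in \precondition{\actionschema}$, giving the conclusion.

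For the reverse direction ($\Leftarrow$), suppose there is a substitution $\substitution$ — which must be total on $\variables(\actionschema)$ for the ground literals $\literal[\substitution]$ to be well-defined and to hold — such that every $\literal[\substitution]$ holds in $\state$. Let $\vertex_i = x_i/\substitution(x_i)$ and $\clique = \{\vertex_1,\ldots,\vertex_k\}$; these are $k$ distinct vertices. I must show every pair $\{\vertex_i,\vertex_j\}$ is an edge, i.e., avoids $\removaledges^{\neq} \cup \removaledges^+ \cup \removaledges^-$. It avoids $\removaledges^{\neq}$ since $\vertex_i,\vertex_j$ carry distinct variables $x_i \neq x_j$. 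For $\removaledges^+$: if $\{\vertex_i,\vertex_j\}$ were in it, some positive-literal atom $\atom \in \pprecondition{\actionschema}$ would fail to match any $\atom' \in \state$ under $[\vertex_i,\vertex_j]$; but since $\literal$ has arity $\le 2$, all its variables appear among $\{x_i,x_j\}$ for a suitable choice of the pair, so $\atom[\vertex_i,\vertex_j] = \atom[\substitution] \in \state$ by hypothesis, and a ground atom trivially matches itself — contradiction. The $\removaledges^-$ case is analogous using the negative literal hypothesis. Hence $\clique$ is a $k$-clique.

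The main obstacle, and the place I would write most carefully, is the bookkeeping around \emph{low-arity literals and the pair-vs-singleton mismatch}: $\removaledges^+$ and $\removaledges^-$ are phrased in terms of pairs $\{\vertex,\vertex'\}$ and the notation $\atom[\vertex,\vertex']$, but a literal may mention zero, one, or two variables, and those variables must be exactly the ones whose clique vertices we feed in. I would make precise (perhaps as a preliminary remark or by reusing the convention from the omitted Lemma~\ref{lemma:classical:ground}) that for a literal with $\variables(\literal) \subseteq \{x_i,x_j\}$ we have $\literal[\vertex_i,\vertex_j] = \literal[\substitution \restriction \variables(\literal)] = \literal[\substitution]$, independent of which clique vertices fill the unused slots, and that the defining conditions of $\removaledges^+,\removaledges^-$ quantify over exactly such pairs. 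The rest — ``a ground atom matches precisely itself'' and ``a $k$-clique on a $k$-partite-by-variable vertex set is a transversal'' — is routine once that indexing is pinned down. I would also note explicitly where arity $\le 2$ is used and that this is the sole reason soundness can fail for higher arities (namely, three or more variables of one literal need not all be covered by a single edge), motivating the numeric generalization that follows.
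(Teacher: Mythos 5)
The paper does not actually prove this theorem: it is imported by citation from \citet{stahlberg-ecai2023}, and the only written-out argument in the paper is the appendix proof of the numeric analogue (Theorem~\ref{thm:numeric:sound_and_complete}), a terse proof by contradiction. Your direct proof is the natural argument, it correctly identifies the pairwise-decomposition role of the arity bound (a single edge covers all variables of a literal, which is exactly why soundness needs arity $\leq 2$ and fails beyond it), and your forward direction is sound, including the collapse of ``matches'' to equality once the atom is fully ground. So you are supplying more than the paper does, and in the same spirit as its numeric-case proof.

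One step needs tightening, in the $\Leftarrow$ direction for $\removaledges^+$. To show an arbitrary pair $\{\vertex_i,\vertex_j\}$ of the candidate clique is not in $\removaledges^+$, you must handle \emph{every} atom $\atom\in\pprecondition{\actionschema}$ as a potential witness of exclusion, including those whose variables are \emph{not} contained in $\{x_i,x_j\}$. Your phrase ``all its variables appear among $\{x_i,x_j\}$ for a suitable choice of the pair'' only discharges the covering pair; for a non-covering pair, $\atom[\vertex_i,\vertex_j]$ is not ground and ``a ground atom matches itself'' does not apply. The repair is immediate from the definition of matching: the partially substituted atom $\atom[\vertex_i,\vertex_j]$ matches the fully ground $\atom[\substitution]\in\groundatoms(\state)$, because every position is either still a variable (which matches anything) or has been substituted consistently with $\substitution$. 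Note this also shows the arity bound is not needed for completeness at all --- only for soundness --- which is worth stating since it sharpens your closing remark. A second, smaller nit: for arity-$0$ and arity-$1$ literals you suggest ``choosing $\vertex_i=\vertex_j$,'' but a self-pair is not an edge; the right move is that $k\geq 2$ guarantees a genuine edge of the clique containing the (at most one) relevant vertex, and non-membership of that edge in $\removaledges^+$ or $\removaledges^-$ already forces the ground literal to hold.
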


Theorem~\ref{thm:classical:sound_and_complete} does not apply to numeric planning because the substitution consistency graph ignores numeric constraints in the action precondition. Nevertheless, we can use the substitution consistency graph to generate ground applicable actions for numeric planning tasks. However, this typically leads to the generation of inapplicable actions. Therefore, in the following section, we will integrate numeric constraints into the graph while generalizing Theorem~\ref{thm:classical:sound_and_complete}.
\section{Numeric Successor Generation}

In this section, we integrate numeric constraints into the edge exclusion criterion of the consistency graph. We say that a numeric constraint $\constraint$ with nonempty free variables $\variables(\constraint)$ is \defined{satisfiable} in a state $\state$, written $\state\vDash \constraint$ iff there exists a substitution function $\substitution$ in $\substitutions(\variables(\constraint),\objects)$ such that the ground numeric constraint $\constraint[\substitution]$ is satisfiable in $\state$, and otherwise it is \defined{unsatisfiable}, written $\state\nvDash \constraint$. Finally, we add an edge removal set $\removaledges^{\Delta}$ based on numeric constraints,
resulting in the set of edges $\edges = \vertices^2\setminus (\removaledges^{\neq}\cup \removaledges^+\cup \removaledges^-\cup \removaledges^{\Delta})$ with

\begin{itemize}
    \item $\removaledges^{\Delta} = \{\{\vertex,\vertex'\} \mid \vertex,\vertex'\in \vertices,\exists \constraint\in\nprecondition{\actionschema}~.~\state\nvDash \constraint[\vertex,\vertex'] \}$, 
    i.e., unsatisfiable numeric constraints under any substitution of the remaining variables.
\end{itemize}

\subsection{Soundness and Completeness}

The following theorem generalizes the soundness and completeness results from Theorem~\ref{thm:classical:sound_and_complete} toward the extended consistency graph that considers numeric constraints.

\begin{restatable}{theorem}{theoremnumeric}
    Consider an action $\actionschema$ with arity $k\geq 2$ where 
    each numeric constraint $\constraint$ in $\precondition{\actionschema}$ has an arity of at most $2$, 
    and a state $\state$. 
    Then, there exists a $k$-clique $\clique = \{\vertex_1,\ldots,\vertex_k\}$ in $\consistencygraph_{\actionschema,\state}$
    iff all ground numeric constraints $\groundconstraint = \constraint[\vertex_1,\ldots,\vertex_k]$ with $\constraint$ in $\precondition{\actionschema}$ hold in $\state$.
    \label{thm:numeric:sound_and_complete}
\end{restatable}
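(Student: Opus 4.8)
My plan is to follow the proof of Theorem~\ref{thm:classical:sound_and_complete}, with numeric constraints in the role of literals and $\removaledges^{\Delta}$ in the role of $\removaledges^{+}\cup\removaledges^{-}$; the literal-induced edges of the graph are unaffected, so I would only spell out the numeric part and note that the two results compose into the full statement that a $k$-clique exists in the extended graph iff all ground literals and all ground numeric constraints hold. Two preliminaries. First, a $k$-clique $\clique=\{\vertex_1,\ldots,\vertex_k\}$ contains no edge of $\removaledges^{\neq}$, so its $k$ vertices carry $k$ distinct variables, hence exactly one per variable of $\actionschema$; thus $\clique$ \emph{is} a substitution $\substitution\in\substitutions(\variables(\actionschema),\objects)$ and $\constraint[\vertex_1,\ldots,\vertex_k]=\constraint[\substitution]$ for every $\constraint\in\nprecondition{\actionschema}$. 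Second, I would adopt the convention that a ground numeric constraint is satisfiable in $\state$ exactly when it holds in $\state$, so that $\state\vDash\cdot$ is defined also on constraints of arity $0$ and thus $\removaledges^{\Delta}$ is meaningful whenever a pair of vertices already fully grounds a constraint.

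For the forward direction, given a $k$-clique $\clique$ and a constraint $\constraint\in\nprecondition{\actionschema}$, I would pick out the at most two vertices of $\clique$ that substitute the variables of $\constraint$: if $\arity{\constraint}=2$ these two vertices already form an edge of $\clique$; if $\arity{\constraint}=1$ I pad with any other clique vertex; if $\arity{\constraint}=0$ I take an arbitrary edge of $\clique$ (all available since $k\geq 2$). Call this edge $\{\vertex,\vertex'\}$. Because $\arity{\constraint}\leq 2$, the variables carried by $\vertex$ and $\vertex'$ already cover $\variables(\constraint)$, so $\constraint[\vertex,\vertex']$ is ground and equals $\constraint[\vertex_1,\ldots,\vertex_k]$; since $\{\vertex,\vertex'\}$ is a clique edge it is not in $\removaledges^{\Delta}$, hence $\state\vDash\constraint[\vertex,\vertex']$, i.e.\ this ground constraint holds in $\state$.

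For the converse direction, I would start from vertices $\vertex_1,\ldots,\vertex_k$, one per variable, whose induced substitution $\substitution$ makes every ground constraint $\constraint[\vertex_1,\ldots,\vertex_k]=\constraint[\substitution]$ with $\constraint\in\nprecondition{\actionschema}$ hold in $\state$ (and, by Theorem~\ref{thm:classical:sound_and_complete}, all ground literals hold as well), and I would show that $\clique=\{\vertex_1,\ldots,\vertex_k\}$ is a $k$-clique by checking each pair $\{\vertex_i,\vertex_j\}$ with $i\neq j$ against the four removal sets: it avoids $\removaledges^{\neq}$ since the two variables are distinct; it avoids $\removaledges^{+}\cup\removaledges^{-}$ by the literal argument of Theorem~\ref{thm:classical:sound_and_complete}; and it avoids $\removaledges^{\Delta}$ because, for every $\constraint\in\nprecondition{\actionschema}$, the full substitution $\substitution$ extends the partial instantiation $\constraint[\vertex_i,\vertex_j]$ to $\constraint[\substitution]$, which holds in $\state$, so $\substitution$ restricted to the remaining free variables of $\constraint[\vertex_i,\vertex_j]$ witnesses $\state\vDash\constraint[\vertex_i,\vertex_j]$. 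Hence every pair is an edge and $\clique$ is a $k$-clique.

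The step I expect to be the main obstacle is the forward direction for arity-$2$ constraints: the claim that a \emph{single} edge of the clique already grounds the constraint. This is precisely where the hypothesis ``arity at most $2$'' is used, and it is tight --- an edge exposes only two variable assignments, so a constraint over three or more variables would leave $\constraint[\vertex,\vertex']$ with free variables, and the edge-removal test $\state\nvDash\constraint[\vertex,\vertex']$, which fires only when \emph{no} completion of those variables satisfies $\constraint$, is then too weak to exclude a bad full substitution, so soundness breaks (consistent with the paper's observation that inapplicable ground actions may then be produced). A secondary point to handle with care is the interface with the literal part: the graph still contains $\removaledges^{+}$ and $\removaledges^{-}$, so I would either additionally carry the arity-$\leq 2$ condition on literals and invoke Theorem~\ref{thm:classical:sound_and_complete} for those edges, or prove only the $\removaledges^{\Delta}$ increment and state the combined characterization. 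Finally, I would double-check the degenerate cases --- constraints of arity $0$, and that $\variables(\constraint)\subseteq\variables(\actionschema)$ so the relevant clique vertices always exist --- so that ``$k\geq 2$'' and ``one vertex per variable'' are genuinely available wherever they are used.
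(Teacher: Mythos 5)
Your proof is correct and follows essentially the same route as the paper's: in both directions the key step is that an arity-$\leq 2$ constraint is fully grounded by a single clique edge, so membership of that edge in $\removaledges^{\Delta}$ is equivalent to the corresponding ground constraint failing to hold in $\state$. The paper phrases both directions as proofs by contradiction and leaves implicit the points you flag (the interface with $\removaledges^{+}$ and $\removaledges^{-}$, arity-$0$ constraints, and the holds-versus-satisfiable convention for ground constraints), but the substance is the same.
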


We can now use the consistency graph to generate bindings that yield ground actions whose precondition components hold in a given state. \emph{Note that such actions may still be inapplicable due to problematic ground effects (see definition of applicability). However, integrating such is analogous to imposing numeric constraints.} Instead, we focus on an efficient method for underapproximating constraint unsatisfiability in the edge removal criterion of $\removaledges^\Delta$ because this problem is coNP-complete (proof in appendix).

\subsection{Leaf-Localized Interval Relaxation}

To efficiently underapproximate constraint unsatisfiability in the edge removal criterion of $\removaledges^\Delta$, we adopt a leaf-localized interval relaxation. The relaxation recursively evaluates a numeric constraint in three stages: (leaf) partially evaluates each function term under the edge substitution, (inner) combines the resulting intervals using interval arithmetic to evaluate function expressions, and (root) applies existential interval comparison to decide satisfiability of the constraint. At the leaf level, we generalize assignment sets \cite{stahlberg-ecai2023} from Boolean to interval outputs. A (numeric) \defined{assignment set} for a function symbol $\function$ and generic function term $\functionterm = \function(\variable_1,\ldots,\variable_k)$ with variables $\variables = {\variable_1,\ldots,\variable_k}$ in a state $\state$ is a function $\assignmentset_{\function,\state} : \bigcup_{\variables'\subseteq \variables}\substitutions(\variables',\objects)\rightarrow\intervals\cup\{\emptyset\}$ with 
\[\assignmentset_{\function,\state}(\substitution) = \hull\bigl(\{\functionterm[\substitution'](\state)\mid\substitution'\in\substitutions(\variables,\objects), \substitution\subseteq\substitution'\}\bigr)\]

The following proposition shows that we can efficiently compute an assignment set, assuming that we restrict the number of substitutions $\assignmentdegree$ in its input domain.

\begin{restatable}{proposition}{propositionassignmentset}

Consider a state $\state$ with $n$ ground function terms over a function symbol $\function$ with arity $\arity{\function} = k$ and variables $\variables = \{x_1,\ldots,x_k\}$, and an integer $\assignmentdegree$ in $\mathbb{N}_0$.
We can construct an assignment set $\assignmentset_{\function,\state}$ for $\function$ in $\state$ with domain $\bigcup_{\variables'\subseteq \variables,|\variables'|\leq \assignmentdegree}\substitutions(\variables',\objects)$ in $\bigO(n\cdot k^\assignmentdegree |\objects|^\assignmentdegree)$ time, $\bigO(n + k^\assignmentdegree |\objects|^\assignmentdegree)$ space. 

\end{restatable}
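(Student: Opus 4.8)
The plan is to unfold the definition of the numeric assignment set into an explicit combinatorial description, then realize it by one pass over the ground function terms of $\state$, with the complexity reading off from the sizes involved.

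First I would make the definition concrete. Fix the generic function term $\functionterm=\function(x_1,\ldots,x_k)$. Every ground function term over $\function$ has the form $\groundfunctionterm=\function(o_1,\ldots,o_k)$ with $o_1,\ldots,o_k\in\objects$, and for a subset $\variables'\subseteq\variables$ and $\substitution\in\substitutions(\variables',\objects)$ there is a (necessarily unique) full extension $\substitution'\supseteq\substitution$ with $\functionterm[\substitution']=\groundfunctionterm$ precisely when $\substitution(x_i)=o_i$ for every $x_i\in\variables'$; in that case $\functionterm[\substitution'](\state)=\groundfunctionterm(\state)$, and this value is defined only when $\groundfunctionterm\in\groundfunctionterms(\state)$. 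Hence the definition of $\assignmentset_{\function,\state}$ collapses to
\[
  \assignmentset_{\function,\state}(\substitution)=\hull\bigl(\{\,\groundfunctionterm(\state)\mid \groundfunctionterm=\function(o_1,\ldots,o_k)\in\groundfunctionterms(\state),\ \substitution(x_i)=o_i\text{ for all }x_i\in\variables'\,\}\bigr),
\]
i.e.\ $\assignmentset_{\function,\state}(\substitution)$ is the smallest interval containing the values of the ground terms over $\function$ in $\state$ whose argument tuple agrees with $\substitution$ on $\variables'$, and $\emptyset$ when there are none.

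Given this identity, I would compute the table as follows. Allocate a table $T$ whose cells are indexed by $D=\bigcup_{\variables'\subseteq\variables,\,|\variables'|\leq\assignmentdegree}\substitutions(\variables',\objects)$ --- conceptually the disjoint union, over the $\leq\assignmentdegree$-subsets $\variables'\subseteq\variables$, of $|\objects|^{|\variables'|}$ cells indexed by object tuples --- and initialize each cell to $\emptyset$. Then pass once over $\groundfunctionterms(\state)$: for each $\groundfunctionterm=\function(o_1,\ldots,o_k)$ with value $r=\groundfunctionterm(\state)$ and each subset $\variables'\subseteq\variables$ with $|\variables'|\leq\assignmentdegree$, let $\substitution_{\variables'}$ be the restriction $x_i\mapsto o_i$ for $x_i\in\variables'$ and set $T[\substitution_{\variables'}]\leftarrow\hull\bigl(T[\substitution_{\variables'}]\cup\{r\}\bigr)$, which is $[r,r]$ if the cell held $\emptyset$ and $[\min(a,r),\max(b,r)]$ if it held $[a,b]$. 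Correctness follows from the displayed identity together with the fact that the convex hull on $\extreals$ satisfies $\hull(A\cup B)=\hull(\hull(A)\cup\hull(B))$, so the incremental updates to a cell accumulate exactly the hull of the relevant value set, while untouched cells keep the correct value $\emptyset=\hull(\emptyset)$. For the bounds, $|D|=\sum_{i=0}^{\assignmentdegree}\binom{k}{i}|\objects|^{i}=\bigO(k^{\assignmentdegree}|\objects|^{\assignmentdegree})$, so allocating and initializing $T$ costs $\bigO(k^{\assignmentdegree}|\objects|^{\assignmentdegree})$ time and space, which with $\bigO(n)$ space for the ground terms gives the claimed $\bigO(n+k^{\assignmentdegree}|\objects|^{\assignmentdegree})$ space; the pass touches, per ground term, the $\sum_{i=0}^{\assignmentdegree}\binom{k}{i}=\bigO(k^{\assignmentdegree})$ cells for its $\leq\assignmentdegree$-subsets, each updated in constant time once indexed, for $\bigO(n\cdot k^{\assignmentdegree})$ further time, hence $\bigO(n\cdot k^{\assignmentdegree}|\objects|^{\assignmentdegree})$ overall. (Both binomial sums stay within the stated powers since $\binom{k}{i}\leq k^{i}/i!$.)

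The one genuinely delicate step is the unfolding in the second paragraph: one must be careful that the hull ranges only over ground terms actually present in $\state$ --- matching the ``$\functionterm[\substitution'](\state)$ is defined'' reading of the definition of $\assignmentset_{\function,\state}$ --- and that a partial substitution of size $\leq\assignmentdegree$ is exactly a choice of a subset $\variables'\subseteq\variables$ together with an object for each variable of $\variables'$, so that each ground term contributes to precisely the cells enumerated by the pass. Everything after that is routine bookkeeping --- counting $\leq\assignmentdegree$-subsets of $\variables$ and object tuples over $\objects$ --- and in the standard RAM model forming a cell index and updating an interval are constant-time, so no step exceeds the stated bounds. I do not anticipate a real obstacle here.
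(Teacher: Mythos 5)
Your proposal is correct and follows essentially the same route as the paper: the paper's proof is also a constructive table-filling algorithm that initializes all $\bigO(k^{\assignmentdegree}|\objects|^{\assignmentdegree})$ cells to $\emptyset$ and then, in one pass over the $n$ ground function terms, updates the hull of every cell whose partial substitution matches the term. Your version merely adds an explicit unfolding of the definition and the hull-union identity to justify correctness (which the paper leaves implicit), and observes the slightly tighter $\bigO(k^{\assignmentdegree})$ count of cells touched per term, but the algorithm and the stated bounds coincide.
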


Intuitively, $\assignmentset_{\function,\state}(\substitution)$ captures the range of values that the function term can take after fixing some variables (an overapproximation). More precisely, we first apply the partial substitution $\substitution$ to $\functionterm$. Then, we assign arbitrary objects to the remaining variables, yielding all ground function terms in $\state$, and take the smallest interval containing all of their values. In our setting, each such $\substitution$ corresponds to an edge $\{\vertex,\vertex'\}$ under consideration, so $\assignmentdegree = 2$ suffices. Interval arithmetic preserves this overapproximation when evaluating function expressions. Finally, the existential interval comparison also overapproximates satisfiability: it may miss some unsatisfiable constraints but never declares a satisfiable one as unsatisfiable. The following corollary summarizes the key result.

\begin{restatable}{corollary}{corollaryrelaxation}

The result of Theorem~\ref{thm:numeric:sound_and_complete} also holds when applying the interval-based relaxation using assignment sets, provided that each function in $\functions$ has arity at most two.

\end{restatable}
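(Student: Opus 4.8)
The plan is to show that the biconditional of Theorem~\ref{thm:numeric:sound_and_complete} survives when the exact test $\state\nvDash\constraint[\vertex,\vertex']$ in the definition of $\removaledges^{\Delta}$ is replaced by the leaf-localized interval relaxation; write $\removaledges^{\Delta}_{\mathit{rel}}$ for the relaxed removal set and $\consistencygraph^{\mathit{rel}}_{\actionschema,\state}$ for the resulting graph (the sets $\removaledges^{\neq},\removaledges^{+},\removaledges^{-}$ are untouched, so the literal machinery carries over unchanged). Two facts drive the argument: the relaxation \emph{overapproximates} satisfiability in general, and it becomes \emph{exact} on ground constraints once every function symbol has arity at most two.

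For the ``if'' direction I would use only overapproximation. As sketched in the discussion preceding the corollary, a structural induction on function expressions shows that the interval computed for a subexpression $\functionexpression$ under an edge substitution contains every value $\functionexpression[\substitution'](\state)$ realizable by a completing substitution $\substitution'$: the base case is the defining identity of $\assignmentset_{\function,\state}$, and the inductive step uses $\interval\odot\interval'=\hull(\{x\odot y\})\supseteq\{x\odot y\mid x\in\interval,y\in\interval'\}$; by the existential semantics of $\oplus$ at the root, any actual witness is preserved, so the relaxation never labels a satisfiable constraint unsatisfiable. Hence $\removaledges^{\Delta}_{\mathit{rel}}\subseteq\removaledges^{\Delta}$, every edge of $\consistencygraph_{\actionschema,\state}$ is an edge of $\consistencygraph^{\mathit{rel}}_{\actionschema,\state}$, and so every $k$-clique of the former is a $k$-clique of the latter; the ``if'' direction of Theorem~\ref{thm:numeric:sound_and_complete} therefore transfers verbatim.

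The crux is an exactness lemma: if every $\function\in\functions$ has arity at most two, then for any edge $\{\vertex,\vertex'\}$ and any $\constraint\in\nprecondition{\actionschema}$ whose free variables $\variables(\constraint)$ are among the two variables of $\vertex,\vertex'$, the relaxation evaluates the ground constraint $\constraint[\vertex,\vertex']$ \emph{precisely}. The point is that such an edge substitution is total on $\constraint$, hence total on each of its function terms, so every leaf $\functionterm$ becomes a ground function term $\groundfunctionterm$ and the corresponding query to $\assignmentset_{\function,\state}$ is a total substitution of the $\arity{\function}\le 2$ variables of $\function$; that query therefore lies in the (degree-$2$) domain of the assignment set and returns $\hull(\{\groundfunctionterm(\state)\})$, i.e.\ the degenerate interval $[\groundfunctionterm(\state),\groundfunctionterm(\state)]$ when $\groundfunctionterm(\state)$ is defined and $\emptyset$ otherwise. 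A short induction then shows that degenerate and empty intervals propagate faithfully through interval arithmetic --- $[a,a]\odot[b,b]$ is $[a\odot b,a\odot b]$ when $a\odot b$ is defined and $\emptyset$ otherwise, and $\emptyset\odot\interval=\emptyset$ --- so the interval attached to each function expression mirrors exactly the defined/undefined status and value of its ground instance; at the root, $[a,a]\oplus[b,b]$ is true iff $a\oplus b$ is true, and any comparison involving $\emptyset$ is false, which is precisely ``$\constraint[\vertex,\vertex']$ is both defined and holds in $\state$''.

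The ``only if'' direction for $\consistencygraph^{\mathit{rel}}_{\actionschema,\state}$ now follows. Let $\clique=\{\vertex_1,\ldots,\vertex_k\}$ be a $k$-clique and $\constraint\in\nprecondition{\actionschema}$ arbitrary. Since a precondition mentions only action parameters, $\variables(\constraint)\subseteq\variables(\actionschema)$, and since $\arity{\constraint}\le 2$ we may pick two distinct clique vertices $\vertex_i,\vertex_j$ whose variables cover $\variables(\constraint)$ (using $k\ge 2$ to supply a second vertex when $\arity{\constraint}\le 1$, and any pair when $\constraint$ is ground). The edge $\{\vertex_i,\vertex_j\}$ lies in the clique, hence is not in $\removaledges^{\Delta}_{\mathit{rel}}$, so the relaxation reports $\constraint[\vertex_i,\vertex_j]$ satisfiable; by the exactness lemma this ground constraint --- which equals $\constraint[\vertex_1,\ldots,\vertex_k]$ because the remaining substitutions touch no variable of $\constraint$ --- holds in $\state$. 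As $\constraint$ was arbitrary, all ground numeric constraints hold, and the equivalence is established. I expect the exactness lemma to be the main obstacle: one must argue carefully that the arity-$\le 2$ hypothesis on function symbols is exactly what keeps each induced assignment-set query total and inside the degree-$2$ domain (so leaves evaluate to degenerate intervals), and then check the routine but fiddly induction that degenerate and empty intervals track the partial-function semantics of ground constraints through $\odot$ and the existential interval comparison. The remaining steps --- the edge-superset inclusion for the ``if'' direction and locating a covering edge inside a clique --- are straightforward.
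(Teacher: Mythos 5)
Your proof is correct and follows the same underlying idea as the paper's: with every function of arity at most two, any assignment-set query induced by a substitution that fully grounds a constraint is a total substitution of at most two generic-term variables (even when the term repeats a variable), hence lies in the degree-$2$ domain and returns the exact singleton interval, so the relaxation decides ground constraints exactly and Theorem~\ref{thm:numeric:sound_and_complete} carries over. Where you go beyond the paper is in how you make this rigorous: the paper's proof asserts that ``the edge removal set $\removaledges^{\Delta}$ remains unchanged,'' which as stated is too strong --- on edges that only \emph{partially} ground a constraint the relaxation loses correlations between subexpressions and may fail to remove an edge that the exact check would remove, so in general one only has $\removaledges^{\Delta}_{\mathit{rel}}\subseteq\removaledges^{\Delta}$. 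Your decomposition into (i) overapproximation everywhere, which preserves completeness because fewer edges are removed, and (ii) exactness precisely on the clique edge that fully grounds the constraint, which is all soundness needs, is the correct repair; the covering-edge step in your ``only if'' direction is exactly how the soundness direction of Theorem~\ref{thm:numeric:sound_and_complete} is argued, so nothing further is required.
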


\newcommand{\numtasks}[1]{\small{(#1)}}
\setlength{\tabcolsep}{3pt}

\begin{table*}[ht]
  \centering
  \caption{
      Comparison of successor generators. For each domain, we report the number of solved tasks (S), the total time in seconds for commonly solved tasks (T), and, for lifted configurations, the overapproximation ratio (OA). Best values (lowest time or OA = 1.00) among lifted configurations are shown in bold.}
  \label{tab:results}
  \begin{tabular}{l l rr rrr rrr rrr}
    \toprule
    &                                            & \multicolumn{5}{c}{Baselines}                                  & \multicolumn{6}{c}{KPKC (Substitution Consistency Graph)}                      \\
                                                   \cmidrule(lr){3-7}                                               \cmidrule(l){8-13}
    & Domain                    & \multicolumn{2}{c}{Grounded}  & \multicolumn{3}{c}{Exhaustive} & \multicolumn{3}{c}{Propositional} & \multicolumn{3}{c}{Numeric} \\
                                                   \cmidrule(lr){3-4}              \cmidrule(lr){5-7}               \cmidrule(lr){8-10}                  \cmidrule(l){11-13}
    &                                            & S   & T                    & S   & OA      & T           & S        & OA  & T                  & S        & OA  & T              \\
    \midrule
    {\multirow{18}{*}{\rotatebox[origin=c]{90}{IPC}}}
    & counters \numtasks{20}                     & 3   & 3                       & 3   & 1.16    & 3              & 3        & 1.16      & \bf{3}               & 3        & \bf{1.00}      & \bf{3}            \\
    & delivery \numtasks{20}                     & 2   & 330                     & 2   & 34.58   & 681            & 2        & \bf{1.00} & 636                  & 2        & \bf{1.00}      & \bf{632}          \\
    & drone \numtasks{20}                        & 4   & --                      & --  & --      & --             & 4        & --        & --                   & 4        & --             & --                \\
    & expedition \numtasks{20}                   & 6   & 361                     & 6   & 63.44   & 1\,034         & 6        & 1.76      & 724                  & 6        & \bf{1.00}      & \bf{584}          \\
    & farmland \numtasks{20}                     & 5   & 323                     & 4   & 2.05    & \bf{368}       & 4        & 1.02      & 543                  & 4        & \bf{1.00}      & 550               \\
    & fo-counters \numtasks{20}                  & 4   & 21                      & 4   & 1.27    & \bf{20}        & 4        & 1.27      & 26                   & 4        & \bf{1.00}      & 25                \\
    & fo-farmland \numtasks{20}                  & 4   & --                      & --  & --      & --             & 4        & --        & --                   & 4        & --             & --                \\
    & fo-sailing \numtasks{20}                   & 3   & 373                     & 3   & 1.32    & \bf{507}       & 3        & 1.32      & 609                  & 3        & \bf{1.00}      & 554               \\
    & hydropower \numtasks{20}                   & 12  & 44                      & 8   & 1851.70 & 2\,171         & 10       & 1.25      & 258                  & 10       & \bf{1.00}      & \bf{232}          \\
    & mprime \numtasks{20}                       & 5   & 670                     & 5   & 5.21    & \bf{805}       & 5        & 1.11      & 1\,110               & 5        & \bf{1.00}      & 929               \\
    & pathwaysmetric \numtasks{20}               & 1   & 727                     & 1   & 19.43   & 1\,352         & 1        & 1.75      & 1\,223               & 1        & \bf{1.00}      & \bf{887}          \\
    & rover \numtasks{20}                        & 4   & 65                      & 4   & 35.24   & 35             & 4        & 1.90      & 35                   & 4        & 1.01           & \bf{27}           \\
    & sugar \numtasks{20}                        & 2   & 224                     & 2   & 21.34   & \bf{328}       & 2        & 2.32      & 433                  & 2        & \bf{1.00}      & 367               \\
    & tpp \numtasks{20}                          & 2   & 2                       & 2   & 10.66   & \bf{2}         & 2        & 1.55      & \bf{2}               & 2        & \bf{1.00}      & \bf{2}            \\
    & \textit{others; entirely unsolved} \numtasks{100}             & 0  & --                       & 0   & --      & --             & 0        & --        &                      & 0        & --             & --                \\           
    \midrule
    & Total \numtasks{380}                       & 57  & 3\,143                  & 44  & 12.38   & 7\,306         & \bf{54}  & 1.40      & 5\,602               & \bf{54}  & \bf{1.00} & \bf{4\,791}            \\
    \midrule
    {\multirow{8}{*}{\rotatebox[origin=c]{90}{MinePDDL}}}
    & gather-multiple-wood \numtasks{3}          & 1   & 20                      & 1   & 75\,145.52  & 1\,430             & 1        & 3\,383.96 & 251                   & 1        & \bf{1.00}      & \bf{64}           \\
    & gather-wood \numtasks{3}                   & 2   & 241                     & 2   & 263\,215.73 & 100                & 2        & 6\,430.66 & 8                     & 2        & \bf{1.00}      & \bf{6}            \\
    & move-to-loc \numtasks{3}                   & 1   & 13                      & 1   & 117\,773.24 & 16                 & 1        & 3\,046.39 & \bf{2}                & 1        & \bf{1.00}      & 1                 \\
    & pickup-and-place \numtasks{3}              & 1   & --                      & --  & --          & --                 & 1        & --        & --                    & 1        & --             & --                \\
    & pickup-diamond \numtasks{3}                & 2   & 13                      & 1   & 118\,539.73 & 21                 & 2        & 3\,658.47 & 3                     & 2        & \bf{1.00}      & \bf{2}            \\
    & place-wood \numtasks{3}                    & 1   & 12                      & 1   & 77\,987.83  & 90                 & 1        & 4\,365.36 & 14                    & 1        & \bf{1.00}      & \bf{3}            \\
    & scaled-move-to-loc \numtasks{150}          & 150 & 2\,255                  & 145 & 133\,756.08 & 16\,815            & 150      & 3\,580.43 & 579                   & 150      & \bf{1.00}      & \bf{386}          \\
    & \textit{others; entirely unsolved} \numtasks{27}              & 0   & --                      & 0   & --          & --                 & 0        & --        & --                    & 0        & --             & --                \\           
    \midrule
    & Total \numtasks{195}                       & 158 & 2\,554                  & 151 & 119\,282.75 & 18\,472            & \bf{158} & 3\,948.69 & 857                   & \bf{158} & \bf{1.00}      & \bf{462}          \\
    \bottomrule
  \end{tabular}
\end{table*}

\section{Experiments}

We evaluated our successor generator on two benchmark sets: (1) all numeric domains from the 2023 IPC and (2) all numeric domains from the MinePDDL benchmark suite \cite{hill-et-al-icaps2024}, which are challenging for both lifted and grounded numeric planners. We consider two baseline configurations: \defined{Grounded}, which constructs a ground task and generates a decision tree using CART \cite{breiman-et-al-1984} with Gini-based node selection over precondition frequencies, and \defined{Exhaustive}, which enumerates all candidate bindings consistent with type annotations. For the $k$-clique enumeration in the substitution consistency graph, we compare \defined{Propositional}, which ignores $\removaledges^\Delta$, against \defined{Numeric}, which includes it. We ran each configuration using $A^*$ search \cite{hart-et-al-ieeessc1968} with a blind heuristic, a 30-minute time limit, and an 8 GB memory limit under the Lab toolkit \cite{seipp-et-al-zenodo2017}.

Table~\ref{tab:results} shows that \textit{Exhaustive} yields the lowest coverage, highest runtime, and largest overapproximation ratio. As more edges are excluded, i.e., moving toward \textit{Propositional} and further to \textit{Numeric}, we observe the following trends: coverage remains stable or improves, runtime generally decreases, and the overapproximation ratio either remains constant or decreases, often drastically on MinePDDL. Most notably, the overapproximation ratio for \textit{Numeric} is 1.00 in all but one domain, specifically the rover domain. Theoretically, soundness conflicts can occur only in rovers and pathwaysmetric, due to predicates, and in the latter, also functions/constraints of arity three.

\section{Conclusions}

We presented the first lifted successor generator for numeric planning, circumventing the worst-case exponential blowup of instantiating a ground task representation while enabling search directly over a first-order representation of numeric planning tasks. We have demonstrated that the key properties of its predecessor, initially developed for the classical planning fragment, naturally carry over to the more expressive numeric setting. Finally, we have empirically validated our theoretical results by demonstrating that it generates precisely all ground actions with applicable precondition in all but one benchmark domain, while offering computational benefits by avoiding the grounding phase.
\section{Appendix}

\appendix

\theoremnumeric*

\begin{proof}

We prove both directions by contradiction. 

``$\Rightarrow$ (Soundness)'': Assume that there exists a $k$-clique $\clique = \{\vertex_1,\ldots,\vertex_k\}$ in the graph $\consistencygraph_{\actionschema,\state}$ and all ground constraint $\constraint[\vertex_1,\ldots,\vertex_k]$ with $\constraint$ in $\precondition{\groundaction}$ do not hold in $\state$. Let the highest arity of a constraint be $\arity{\constraint} = 1$, and $x/o\in \clique$ be the vertex corresponding to the substitution that grounds $\constraint$. Since $k\geq 2$, there must be an $\edge\in\edges$ with $x/o\in\edge$. However, $\state\nvDash \constraint[\vertex_1,\ldots,\vertex_k]$, hence, $\edge\in \removaledges^\Delta$, and thus, $\edge\notin\edges$. Therefore, $x/o$ cannot be part of a clique of size $k\geq 2$. We conclude, $\edge\in \edges$ and $\edge\notin\edges$. This is a contradiction. Analogously, for the case where $\arity{\constraint} = 2$.

``$\Leftarrow$ (Completeness)'': Assume that all ground numeric constraints $\constraint[\vertex_1,\ldots,\vertex_k]$ with $\constraint$ in $\precondition{\groundaction}$ hold in $\state$ and there does not exist a clique $\clique = \{\vertex_1,\ldots,\vertex_k\}$ in the graph $\consistencygraph_{\actionschema,\state}$. Since $\state \vDash \constraint[\vertex_1,\ldots,\vertex_k]$ for all $\constraint$ in $\precondition{\groundaction}$, each constraint does not contribute to $\removaledges^\Delta$ and therefore cannot remove any edges from $\edges$. Thus, no edge was removed, and there must be a clique $\clique = \{\vertex_1,\ldots,\vertex_k\}$. This contradicts the assumption that no such clique exists.

\end{proof}

\propositionassignmentset*

\begin{proof} 

The proof is constructive. Algorithm~\ref{alg:assignmentset} defines a procedure to construct the assignment set $\lambda_\function$ for a function symbol $\function$ in a state $\state$ with a maximal substitution degree of $\assignmentdegree$. 
Lines~\ref{line:init:start}--\ref{line:init:end} initialize $\lambda_\function$ to $\emptyset$ for each substitution function $\substitution$ such that $|\variables(\substitution)| \leq \assignmentdegree$. 
There are $\bigO(k^\assignmentdegree |\objects|^\assignmentdegree)$ such entries, each requiring $\bigO(1)$ space to store an interval or $\emptyset$. 
Lines~\ref{line:set:start}--\ref{line:set:end} iterate over each ground function term $\groundfunctionterm = \function(\object_1,\ldots,\object_n)$ in $\state$. For each such term, the nested loop in lines~\ref{line:set:start2}--\ref{line:set:end2} iterates over every substitution function $\substitution$ such that $\function(\variable_1,\ldots,\variable_{\arity{\function}})[\substitution]$ matches $\groundfunctionterm$, where matching is defined analogous to atoms. 
This inner loop can require at most $\bigO(k^\assignmentdegree |\objects|^\assignmentdegree)$ iterations per ground function term with $\bigO(1)$ time per iteration, resulting in an overall complexity of $\bigO(n \cdot k^\assignmentdegree |\objects|^\assignmentdegree)$ time and $\bigO(n + k^\assignmentdegree |\objects|^\assignmentdegree)$ space.

\begin{algorithm}[h]
\caption{Construct an assignment set $\lambda_\function$ with maximal number of substitutions equal to $\assignmentdegree$.}
\label{alg:assignmentset}
\begin{algorithmic}[1]
    \STATE \textbf{Function} \textsc{Construct}$(\function, \state, \assignmentdegree)$
    \STATE $n\gets\arity{\function}$ \label{line:init:start}
    \STATE $\assignmentdegree\gets\min\{\assignmentdegree, n\}$
    \STATE $\variables\gets \{\variable_1,\ldots,\variable_n\}$
    \FOR{$j = 0,\ldots,\assignmentdegree$}
    \FOR{$\substitution\in\substitutions(\variables,\objects)$ with $|\variables(\substitution)| = j$ }
    \STATE $\lambda_{\function,\state}(\substitution)\gets \emptyset$
    \ENDFOR
    \ENDFOR \label{line:init:end}
    \FOR{$\groundfunctionterm = \function(\object_1,\ldots,\object_n)\in \groundfunctionterms(\state)$} \label{line:set:start}
    \FOR{$k = 0,\ldots,\assignmentdegree$} \label{line:set:start2}
    \FOR{$\{j_1,\ldots,j_k\} \in \binom{\{1,\ldots,n\}}{k}$}
    \STATE $\substitution\gets\variable_{j_1}/\object_{j_1},\ldots,\variable_{j_k}/\object_{j_k}$
    \STATE $\lambda_{\function,\state}(\substitution)$$\gets \hull\bigl(\lambda_{\function,\state}(\substitution) \cup \{\groundfunctionterm(\state)\}\bigr)$
    \ENDFOR
    \ENDFOR \label{line:set:end2}
    \ENDFOR \label{line:set:end}
    \STATE \textbf{return} $\lambda_{\function,\state}$
    \STATE
\end{algorithmic}
\end{algorithm}

\end{proof}

\corollaryrelaxation*

\begin{proof}

When constructing an assignment set, the generic function term is defined as $\function(x_1,\ldots,x_k)$ with distinct variables in each argument position. However, actual function terms in constraints may reuse variables, e.g., $\function(x,\ldots,x)$. In this case, assignment sets implicitly treat different argument positions as if they were independent, even though in a specific function term, they refer to the same variable. When every function has arity at most two, each function term can involve at most two variables, possibly with repetition. In this case, an assignment set with $\assignmentdegree=2$ still enumerates all relevant substitutions. Hence, partial evaluation coincides with exact evaluation, the edge removal set $\removaledges^\Delta$ remains unchanged, and the proof of Theorem~\ref{thm:numeric:sound_and_complete} carries over.

\end{proof}

\begin{theorem}
 The numeric constraint satisfiability problem (NC-SAT) is NP-complete.
\end{theorem}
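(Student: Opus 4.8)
The plan is to prove the two halves of NP-completeness separately, reducing from 3-SAT for hardness.

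\emph{Membership in NP.} Given a state $\state$ and a numeric constraint $\constraint$ with $\variables(\constraint)\neq\emptyset$, the natural certificate is a single substitution $\substitution\in\substitutions(\variables(\constraint),\objects)$, which has size $\bigO(|\variables(\constraint)|\cdot\log|\objects|)$, hence polynomial. Verification evaluates the ground constraint $\constraint[\substitution]$ bottom-up over the values $\groundfunctionterm(\state)$, checking along the way that every occurring function term lies in $\groundfunctionterms(\state)$ and that no division by zero occurs, and finally checking the top-level comparison. The one point that needs care is that the arithmetic values must not blow up: since a numeric constraint is a \emph{tree}-shaped expression (no sharing of subexpressions), a routine induction shows that an expression of syntactic size $s$ over inputs of bit-length $b$ evaluates, when defined, to a rational whose numerator and denominator have $\bigO(s\cdot b)$ bits; so evaluation runs in polynomial time and NC-SAT is in NP.

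\emph{NP-hardness.} I would reduce from 3-SAT. Given a 3-CNF formula $\threesatformula$ over variables $\threesatvariables=\{\threesatvariable_1,\ldots,\threesatvariable_n\}$ with clauses $\threesatclauses=\{\threesatclause_1,\ldots,\threesatclause_m\}$ and $\threesatclause_j=\threesatliteral_{j,1}\vee\threesatliteral_{j,2}\vee\threesatliteral_{j,3}$, I set $\objects=\{\object_T,\object_F\}$, introduce one variable $\variable_i$ per $\threesatvariable_i$, and let $\state$ contain the ground function terms of a small truth table: $\mathit{val}(\object_T)=1$, $\mathit{val}(\object_F)=0$, $\mathit{nval}(\object_T)=0$, $\mathit{nval}(\object_F)=1$, and an arity-$0$ term $\mathit{one}()=1$ (the last only needed if numeric constants are not allowed in expressions). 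Each $\substitution\in\substitutions(\{\variable_1,\ldots,\variable_n\},\objects)$ then corresponds bijectively to a truth assignment $\alpha_\substitution$ via $\alpha_\substitution(\threesatvariable_i)=\mathrm{true}\iff\substitution(\variable_i)=\object_T$. Writing $t(\threesatliteral)$ for $\mathit{val}(\variable_i)$ when $\threesatliteral=\threesatvariable_i$ and for $\mathit{nval}(\variable_i)$ when $\threesatliteral=\neg\threesatvariable_i$, the reduction outputs the single numeric constraint
\[
\prod_{j=1}^{m}\bigl(t(\threesatliteral_{j,1})+t(\threesatliteral_{j,2})+t(\threesatliteral_{j,3})\bigr)\;\geq\;\mathit{one}().
\]
Each factor is built with $+$, the product with $\times$, and the whole is one comparison, so this is a legal numeric constraint of arity $n$; the instance has size $\bigO(n+m)$, so the reduction is polynomial.

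\emph{Correctness and the main obstacle.} Under any $\substitution$, every function term in $\constraint[\substitution]$ is defined (each $\variable_i$ maps into $\{\object_T,\object_F\}$), the $j$-th factor evaluates to the number of literals of $\threesatclause_j$ satisfied by $\alpha_\substitution$, a nonnegative integer, and the product is $\geq1$ iff every factor is $\geq1$ iff every clause has a satisfied literal iff $\alpha_\substitution\models\threesatformula$. Since every assignment is some $\alpha_\substitution$, we get $\state\vDash\constraint$ iff $\threesatformula$ is satisfiable, which with membership yields NP-completeness (and, by complementation, the coNP-completeness of constraint unsatisfiability invoked earlier). I expect the only genuinely delicate step to be the membership argument — ruling out super-polynomial bit-length under nested multiplication, which works precisely because constraints are trees rather than circuits; the hardness direction is routine once one sees that an entire CNF can be packed into one comparison by multiplying the per-clause counts of satisfied literals.
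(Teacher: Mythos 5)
Your proof is correct and follows essentially the same route as the paper: membership by guessing a substitution and evaluating, and hardness by a 3-SAT reduction that encodes truth values as two objects and reads off literal truth from function-term values in the state. The only differences are cosmetic gadget choices (two shared truth-table functions and a product-of-clause-counts $\geq 1$ test, versus the paper's one function per propositional variable and a sum-of-clause-indicators $= m$ test); your explicit bound on the bit-length of intermediate values in the membership check is a point the paper leaves implicit.
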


\begin{proof}

First, we show membership. Given a numeric constraint $\constraint$ with free variables $\variables(\constraint)$ and a state $\state$, we can nondeterministically guess a substitution $\substitution \in \substitutions(\variables(\constraint), \objects)$ in polynomial time, since both $\variables(\constraint)$ and $\objects$ are finite. Evaluating the ground constraint $\constraint[\substitution](\state)$ is also polynomial, as it requires computing arithmetic expressions followed by a single comparison.

Second, we prove hardness by reduction from 3-SAT. We construct, in polynomial time, a numeric constraint $\constraint$ from a 3-SAT formula $\threesatformula$ that is satisfiable in a state $\state$ if and only if $\threesatformula$ is satisfiable.

Consider a propositional formula $\threesatformula$ over variables $\threesatvariables = \{\threesatvariable_1,\ldots,\threesatvariable_n\}$ with clauses $\threesatclauses = \{\threesatclause_1,\ldots,\threesatclause_m\}$ where each clause consists of exactly three literals. For the numeric constraint $\constraint$ and state $\state$, we define the set of objects $\objects = \{\object_\top,\object_\bot\}$, the set of free variables $\variables = \{\variable_\threesatvariable\mid \threesatvariable\in\threesatvariables\}$, and the set of unary function symbols $\functions = \{\function_\threesatvariable\mid\threesatvariable\in\threesatvariables\}$. Moreover, we define for each $\threesatvariable$ in $\threesatvariables$, the following ground functions terms with associated values $\function_\threesatvariable(\object_\top)(\state) = 1$ and $\function_\threesatvariable(\object_\bot)(\state) = 0$. We now encode the semantics of the formula $\threesatformula$ in the constraint $\constraint$. The construction proceeds by defining numeric gadgets for each component of $\phi$. 

\begin{itemize}

\item \textbf{Literal gadget.} For each literal $l$ over a variable $\threesatvariable$ in $\threesatvariables$, we define a literal gadget $\gadget_\threesatliteral$ that defines a ground function term as

$\gadget_\threesatliteral :=
\begin{cases}
 \function_\threesatvariable(x_\threesatvariable) & \text{ if } \threesatliteral = \threesatvariable \\
    1 - \function_\threesatvariable(x_\threesatvariable) & \text{ if } \threesatliteral = \neg \threesatvariable
\end{cases}
$

\item \textbf{Clause gadget.} For each clause $\threesatclause = (\threesatliteral_1\lor\ldots\lor \threesatliteral_3)$, define the function expression 
\[\gadget_\threesatclause := 1 - \prod\limits_{i=1}^3 (1 - \gadget_{\threesatliteral_{i}})\] 

\item \textbf{Formula gadget.} For the formula $\threesatformula$, we define the numeric constraint gadget $\gadget_{\threesatformula}$

\[\gadget_\threesatformula := \left(\sum\limits_{j=1}^{m}\gadget_{\threesatclause_j}\right) = m\]

\end{itemize}

Finally, we show that $\constraint$ is satisfiable in a state $\state$ if and only if the 3-SAT formula $\threesatformula$ is satisfiable.

``$\Leftarrow$'': Consider a truth assignment to the variables $\threesatvariables$ that satisfies the formula $\threesatformula$. Consider the substitution function $\theta$ with $\theta(\variable_\threesatvariable) = \object_\top$ (resp~$\object_\bot$) if $\threesatvariable$ is true (resp.~false) in the truth assignment. Next, we show that $\constraint[\theta](\state)$ is true. For any satisfied literal $\threesatliteral = \threesatvariable$ (resp.~$\threesatliteral = \neg\threesatvariable$), we have $\gadget_\threesatliteral = \function_\threesatvariable(x_\threesatvariable)[x_\threesatvariable/\object_\top](\state) = 1$ (resp.~$\gadget_\threesatliteral = 1 - \function_\threesatvariable(x_\threesatvariable)[x_\threesatvariable/\object_\bot](\state) = 1 - 0 = 1$). Moreover, for an unsatisfied literal $\threesatliteral = \threesatvariable$ (resp.~$\threesatliteral = \neg\threesatvariable$), we have $\gadget_\threesatliteral = \function_\threesatvariable(x_\threesatvariable)[x_\threesatvariable/\object_\bot] = 0$ (resp.~$1 - \function_\threesatvariable(x_\threesatvariable)[x_\threesatvariable/\object_\top] = 1 - 1 = 0$). More generally speaking, it holds that $\gadget_\threesatliteral = 1$ if and only if $\threesatliteral$ is satisfied, and otherwise $\gadget_\threesatliteral = 0$. Consequently, for each clause $\threesatclause$, we have $\gadget_\threesatclause = 1$ because at least one literal in each clause holds, and hence at least one product term $(1 - \gadget_\threesatliteral) = 0$ such that the whole product yields $0$ and $1 - 0 = 1$. Finally, $\gadget_\threesatformula = m$, showing that the constraint is satisfied in $\state$.

$\Rightarrow$ Consider a substitution function $\theta$ that satisfies the constraint $\constraint$ in $\state$. Observe that $\gadget_\threesatliteral$ is either $0$ or $1$ because $\function_\threesatvariable(x_\threesatvariable)[x_\threesatvariable/\object_\top](\state) = 1$, $\function_\threesatvariable(x_\threesatvariable)[x_\threesatvariable/\object_\bot] = 0$, and subtracting $1$ or $0$ from $1$ yields $0$ or $1$. Consequently, $\gadget_\threesatclause = 1$ or $\gadget_\threesatclause = 0$ because the product of combinations of numbers of $0$ and $1$ is either $0$ or $1$, and similar subtraction argument. Hence, since the constraint is satisfied, every $\gadget_\threesatclause$ must be $1$ such that the sum over all clauses yields $m$. Consequently, the product in each $\gadget_\threesatclause$ must be $0$, which requires at least one of its $\gadget_\threesatliteral = 1$. By construction, $\gadget_\threesatvariable = 1$ and $\gadget_{\neg\threesatvariable} = 0$ if $\theta(\variable_\threesatvariable) = \object_\top$. Moreover, $\gadget_\threesatvariable = 0$ and $\gadget_{\neg\threesatvariable} = 1$ if $\theta(\variable_\threesatvariable) = \object_\bot$. Consequently, we can directly retrieve the truth assignment to the variable $\threesatvariable$, i.e., $\top$ if $\theta(\variable_\threesatvariable) = \object_\top$ or $\bot$ if $\theta(\variable_\threesatvariable) = \object_\bot$.

\end{proof}

\begin{theorem}
 The numeric constraint unsatisfiability problem (NC-UNSAT) is coNP-complete.
\end{theorem}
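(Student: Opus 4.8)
The plan is to obtain this as an immediate corollary of the preceding theorem. First I would observe that NC-UNSAT is precisely the complement of NC-SAT: a pair $(\constraint,\state)$ is a yes-instance of NC-UNSAT exactly when $\state\nvDash\constraint$, i.e.\ exactly when it fails to be a yes-instance of NC-SAT. Because NC-SAT is a total decision problem (every input has a well-defined yes/no answer), this complementation is unambiguous, so it suffices to show membership in coNP and coNP-hardness separately.

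For membership in coNP, I would reuse the NP-membership argument of the previous theorem, but applied to the no-instances of NC-UNSAT. A no-instance is a pair $(\constraint,\state)$ with $\state\vDash\constraint$; a witnessing substitution $\substitution\in\substitutions(\variables(\constraint),\objects)$ is a polynomial-size certificate, since $\variables(\constraint)$ and $\objects$ are finite, and verifying that $\constraint[\substitution](\state)$ evaluates to true takes polynomial time (arithmetic evaluation followed by one comparison). Hence NC-UNSAT $\in$ coNP.

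For coNP-hardness, there are two equivalent routes, and I would present whichever reads more cleanly in context. The abstract route: the complement of an NP-complete problem is coNP-complete, and we have just identified NC-UNSAT with the complement of NC-SAT, which the previous theorem showed to be NP-complete. The concrete route, which keeps the argument self-contained, reuses the 3-SAT gadget construction verbatim: given a 3-CNF formula $\threesatformula$, build the same constraint $\constraint$ and state $\state$, and invoke the equivalence already proved there, namely that $\constraint$ is satisfiable in $\state$ iff $\threesatformula$ is satisfiable. Negating both sides yields that $\constraint$ is unsatisfiable in $\state$ iff $\threesatformula$ is unsatisfiable, which is exactly a polynomial-time many-one reduction from 3-UNSAT (the complement of 3-SAT, hence coNP-complete) to NC-UNSAT.

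I do not anticipate any real technical obstacle, since the heavy lifting—the gadget reduction and its correctness proof—was already done for NC-SAT. The only points requiring care are bookkeeping ones: stating precisely what "complement" means and why it applies here, and making sure that in the coNP-membership argument the certificate is attached to the no-instances (the satisfiable pairs) rather than the yes-instances.
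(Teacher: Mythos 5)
Your proposal is correct and rests on the same observation as the paper's proof, namely that NC-UNSAT is the complement of NC-SAT and the complement of an NP-complete problem is coNP-complete; the paper states this in one sentence, while you additionally spell out the coNP membership certificate and an explicit reduction from 3-UNSAT, which are fine but not needed beyond the complementation argument.
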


\begin{proof}
 It follows immediately that NC-UNSAT is coNP-complete, since it is the complement of NC-SAT, which is NP-complete.
\end{proof}

\bibliography{abbrv,literatur,extra-literatur,crossref,extra-crossref}

\end{document}